\documentclass[reqno]{article}
\usepackage[margin=1.25in]{geometry}
\usepackage[round,comma]{natbib}
\usepackage{amsmath}
\usepackage{amssymb,graphicx,dsfont}

\usepackage{amsthm}
\usepackage{thmtools}
\usepackage{subfig} 
\usepackage[usenames,dvipsnames]{color}
\usepackage{hyperref}
\usepackage{cleveref}
\usepackage{mathrsfs}

\def\1{\mathds 1}
\def\R{\mathbb R}
\def\Z{\mathbb Z}

\def\bbE{\mathbb E}

\def\Pr{\textup{Pr}}

\numberwithin{equation}{section}
\declaretheorem[numberlike=equation]{theorem}
\declaretheorem[numberlike=theorem]{lemma}

\declaretheoremstyle[%
qed={\ensuremath\Diamond}]{remstyle}

\def\GammaD{\textup{Gamma}}
\def\Dir{\textup{Dir}}

\title{Dirichlet draws are sparse with high probability}
\author{Matus Telgarsky}
\date{}

\begin{document}
\maketitle

\begin{abstract}
This note provides an elementary proof of the folklore fact that draws from
a Dirichlet distribution (with parameters less than 1) are typically sparse (most coordinates are small).
\end{abstract}

\section{Bounds}

Let $\Dir(\alpha)$ denote a Dirichlet distribution with all parameters equal to $\alpha$.

\begin{theorem}
    \label{fact:log}
    Suppose $n\geq 2$ and $(X_1,\ldots,X_n) \sim \Dir(1/n)$.  Then, for any $c_0 \geq 1$
    satisfying $6c_0\ln(n) + 1 < 3n$,
    \[
        \Pr\left[
            \left|\left\{i : X_i \geq \frac 1 {n^{c_0}}\right\}\right| \leq 6c_0 \ln(n)
        \right]
        \geq 1
        - \frac 1 {n^{c_0}}.
    \]
\end{theorem}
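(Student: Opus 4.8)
The plan is to recast the theorem as a bound on the count $N := |\{i : X_i \ge t\}|$ with $t := n^{-c_0}$, namely $\Pr[N \ge k] \le n^{-c_0}$ for $k := \lfloor 6c_0\ln n\rfloor + 1$; this gives the statement since $\{N > 6c_0\ln n\} = \{N \ge k\}$. If $k > n$ there is nothing to prove, as $N \le n$ deterministically, so I may assume $k \le n$. By exchangeability of the coordinates together with a union bound over the $\binom nk$ index sets of size $k$ (equivalently, via $\Pr[N\ge k]\le\bbE\binom Nk$), one obtains
\[
    \Pr[N \ge k] \le \binom nk \, \Pr\!\left[X_1 \ge t,\ \ldots,\ X_k \ge t\right],
\]
so the task reduces to bounding the joint upper tail of any fixed $k$ coordinates.

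To decouple this joint probability I would use the stick-breaking representation of the symmetric Dirichlet: there are independent $V_j \sim \mathrm{Beta}(1/n,(n-j)/n)$ with $X_j = V_j\prod_{i<j}(1-V_i)$. Since $\prod_{i<j}(1-V_i)\le 1$, the event $\{X_j \ge t\}$ is contained in $\{V_j \ge t\}$, whence by independence
\[
    \Pr\!\left[X_1 \ge t,\ \ldots,\ X_k \ge t\right] \le \prod_{j=1}^{k} \Pr[V_j \ge t].
\]
It then suffices to control a single Beta upper tail. Writing $\Pr[V_j \ge t] = 1 - \Pr[V_j \le t]$ and using $(1-x)^{(n-j)/n-1}\ge 1$ on $[0,t]$ gives $\Pr[V_j \le t] \ge t^{1/n}/\bigl(\tfrac1n B(\tfrac1n,\tfrac{n-j}n)\bigr)$, where $\tfrac1n B(\tfrac1n,\tfrac{n-j}n) = \Gamma(1+\tfrac1n)\,\Gamma(\tfrac{n-j}n)/\Gamma(\tfrac{n-j+1}n)$ is a ratio of Gamma values close to $1$. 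Together with the hypothesis, which is equivalent to $c_0\ln(n)/n < 1/2$, and the elementary bound $1 - t^{1/n} = 1 - e^{-c_0\ln(n)/n} \le c_0\ln(n)/n$, this should yield $\Pr[V_j\ge t]\le C\,c_0\ln(n)/n$ for an absolute constant $C$. Feeding this into the union bound with $\binom nk \le (en/k)^k$ gives $\Pr[N\ge k] \le (eCc_0\ln(n)/k)^k \le (eC/6)^{6c_0\ln n} = n^{6c_0\ln(eC/6)}$, which is at most $n^{-c_0}$ once $C$ is small enough — the constant $6$ in the threshold is exactly what forces $eC/6$ far enough below $1$.

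The main obstacle is the single-coordinate estimate above: one must pin the constant down to roughly $C \le 6e^{-7/6}\approx 1.87$, only modestly larger than the leading constant $1$, so the Gamma ratio $\Gamma(\tfrac{n-j}n)/\Gamma(\tfrac{n-j+1}n)$ and the lower-order $1/n$ corrections dropped above must be controlled tightly and \emph{uniformly} in $j$. The delicate sub-case is $j$ close to $n$, where the second Beta parameter $(n-j)/n$ approaches $1/n$ and $\Pr[V_j\ge t]$ is of constant order rather than $O(\ln(n)/n)$; such sticks only occur when $k$ is a constant fraction of $n$, i.e.\ when $c_0\ln n = \Theta(n)$. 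In that regime each of the $\Theta(n)$ early factors is a small constant of size $\approx c_0\ln(n)/n$, so their product is exponentially small in $n$ and comfortably absorbs both $\binom nk$ and the constant-order late factors. Verifying this cruder bound in the boundary regime, and establishing the tight constant in the main regime, is where essentially all of the work lies.
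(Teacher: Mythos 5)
Your skeleton — union bound over the $\binom{n}{k}$ index sets, stick-breaking $X_j = V_j\prod_{i<j}(1-V_i)$ with independent $V_j \sim \mathrm{Beta}(1/n,(n-j)/n)$, and dropping the stick factor to get $\Pr[X_1\ge t,\ldots,X_k\ge t]\le\prod_j\Pr[V_j\ge t]$ — is sound and genuinely different from the paper's argument. But the proof has a real gap: the decisive estimates are deferred, and the one quantitative claim you make about the boundary regime is false under the bounds you actually derive. You control $\Pr[V_j\le t]\ge t^{1/n}/D_j$ with $D_j := \Gamma(1+\tfrac1n)\Gamma(\tfrac{n-j}{n})/\Gamma(\tfrac{n-j+1}{n})$ described only as ``close to $1$.'' With constant-order control of $D_j$ (say $D_j\le 2$ for $j\le n/2$, which is all that crude Gamma bounds give), take $k\approx n/2$, so $c_0\ln n\approx n/12$ and $t^{1/n}\ge e^{-1/12}$: each factor is only bounded by $1-e^{-1/12}/2\approx 0.54$, while $\binom{n}{n/2}\approx 2^n$, so your bound is roughly $(2\sqrt{0.54}\,)^{n}\approx(1.47)^n$ — it diverges, rather than ``comfortably absorbing'' $\binom{n}{k}$, let alone beating the target $e^{-c_0\ln n}\approx e^{-n/12}$. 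To close this you need the uniform refinement $D_j\le 1+O(1/(n-j))$ (the two Gamma arguments differ by only $1/n$, so a Gautschi-type ratio bound works), a separate treatment of the sticks with $j$ near $n$ where $\Pr[V_j\ge t]$ is genuinely of constant order (e.g.\ $V_{n-1}\sim\mathrm{Beta}(1/n,1/n)$ gives $\approx 1/2$), and then an entropy-versus-rate verification across regimes $k=\theta n$. Spot checks suggest the numbers do then close (at $k=n$ one has $\binom{n}{n}=1$ and a product rate around $e^{-0.5n}$ against a target $e^{-n/6}$), but none of this is in your write-up, and you yourself concede it is ``where essentially all of the work lies.'' That is the theorem left unproved, not a routine verification.

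It is worth seeing how the paper sidesteps every one of these difficulties. It represents $X_i = Y_i/\sum_j Y_j$ with i.i.d.\ $Y_i\sim\GammaD(\alpha)$, chooses $c$ to be the upper-$\tfrac{k+1}{3n}$ quantile of $\GammaD(\alpha)$, and intersects two events: $A$, that some $Y_i\ge c/\epsilon$ (which forces the normalizer $S\ge c/\epsilon$, so every $Y_i\le c$ yields $X_i\le\epsilon$), and $B$, that at most $k$ of the $Y_i$ exceed $c$. Event $A$ is handled by the scaling inequality $\Pr[\GammaD(\alpha)\le zc]\le z^\alpha\Pr[\GammaD(\alpha)\le c]$, giving $\Pr(\lnot A)\le\epsilon^{-n\alpha}e^{-(k+1)/3}$, and event $B$ by a multiplicative Chernoff bound on the Bernoulli count, giving $\Pr(\lnot B)\le e^{-4(k+1)/9}$. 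There is no union bound over subsets, no per-coordinate constant to pin near $6e^{-7/6}$, and no boundary regime: one computation covers all $k+1<3n$ uniformly. Your route has the virtue of needing only one-dimensional Beta tails and elementary combinatorics, but exactly because it pushes everything through a product of marginal bounds, it is fragile at the constants — which is precisely where your proposal stops.
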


The parameter is taken to be $1/n$, which is standard in machine learning.  The above
theorem states that (with high probability) as the exponent on the sparsity threshold
grows linearly ($n^{-1}, n^{-2}, n^{-3},\ldots$), the number of coordinates above the threshold
cannot grow faster than linearly ($6\ln(n), 12\ln(n), 18\ln(n),\ldots$).

The above statement can be parameterized slightly more finely, exposing more
tradeoffs than just the threshold and number of coordinates.

\begin{theorem}
    \label{fact:log_fancier}
    Suppose $n\geq1$ and $c_1,c_2,c_3 > 0$ with $c_2\ln(n) +1 < 3n$,
    and $(X_1,\ldots,X_n) \sim \Dir(c_1/n)$; then
    \[
        \Pr\big[
            |\{i : X_i \geq n^{-c_3}\}| \leq c_2 \ln(n)
        \big]
        \geq 1
        - \frac 1 {e^{1/3}} \left( \frac 1 n\right)^{\frac {c_2}{3} - c_1c_3}
        - \frac 1 {e^{4/9}} \left( \frac 1 n\right)^{\frac {4c_2}{9}}.
    \]
\end{theorem}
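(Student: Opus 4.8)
The plan is to pass to the standard Gamma construction of the Dirichlet law: let $G_1,\dots,G_n$ be i.i.d.\ $\GammaD(c_1/n,1)$ and $S=\sum_j G_j$, so that $(X_1,\dots,X_n)=(G_1/S,\dots,G_n/S)\sim\Dir(c_1/n)$ and, crucially, $S\sim\GammaD(c_1,1)$ by additivity of the shape parameter. The value of this representation is that $\{X_i\ge n^{-c_3}\}$ becomes $\{G_i\ge n^{-c_3}S\}$, a threshold event on the \emph{independent} variables $G_i$, except that the threshold $n^{-c_3}S$ is random. I would remove that randomness by fixing a level $s_0$: on $\{S\ge s_0\}$ one has $\{G_i\ge n^{-c_3}S\}\subseteq\{G_i\ge n^{-c_3}s_0\}$, so with $M=\sum_i\1[G_i\ge n^{-c_3}s_0]$ the union bound gives
\[
\Pr\big[\,|\{i:X_i\ge n^{-c_3}\}|\ge c_2\ln n\,\big]\ \le\ \Pr[M\ge c_2\ln n]\ +\ \Pr[S<s_0],
\]
where $M\sim\mathrm{Bin}(n,p)$ with $p=\Pr[G_1\ge n^{-c_3}s_0]$; the two summands are designed to become the two error terms in the statement.

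The heart of the argument is the tail estimate for $p$. Using the incomplete-Gamma integral together with $\Gamma(\alpha)\sim 1/\alpha$ and $1-e^{-t}\le t$ for the small shape $\alpha=c_1/n$, I expect $\Pr[G_1\ge u]\le\alpha\ln(1/u)\,(1+o(1))$, so with $u=n^{-c_3}s_0$ the mean count is $\bbE M=np\approx c_1c_3\ln n$ (the same estimate gives $\bbE\,|\{i:X_i\ge n^{-c_3}\}|\approx c_1c_3\ln n$, exactly the quantity being controlled). Feeding this into the exponential Markov bound $\Pr[M\ge k]\le e^{-\lambda k}(1+p(e^\lambda-1))^n\le\exp\!\big(np(e^\lambda-1)-\lambda k\big)$ with the fixed choice $\lambda=\tfrac13$ and $k=c_2\ln n$ (rounded up to an integer, which supplies the additive $1$ and hence the factor $e^{-1/3}$) should yield the first term: since $e^{1/3}-1<1$ one gets $np(e^{1/3}-1)\le c_1c_3\ln n$, the exponent collapses to $-\tfrac13(c_2\ln n+1)+c_1c_3\ln n$, i.e.\ $e^{-1/3}n^{-(c_2/3-c_1c_3)}$. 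This is also where the effective constraint $c_2>3c_1c_3$ and the hypothesis $c_2\ln n+1<3n$ enter, the latter keeping the bound in its valid range.

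For the second term I would bound the lower tail of $S$. Since every $G_i\le S$, the event $\{S<s_0\}$ lies inside $\bigcap_i\{G_i<s_0\}$, giving $\Pr[S<s_0]\le(\Pr[G_1<s_0])^n\le e^{-n\Pr[G_1\ge s_0]}$, and the marginal estimate above turns this into a negative power of $n$; taking $s_0$ to be a suitable shrinking power of $n$ then produces $e^{-4/9}n^{-4c_2/9}$. The main obstacle, deserving the most care, is exactly this calibration: the single parameter $s_0$ simultaneously sets $p$ (hence the mean $np$ controlling the first term) and the denominator tail, and these pull in opposite directions, so the two clean exponents $c_2/3$ and $4c_2/9$ with their constants $e^{-1/3},e^{-4/9}$ should emerge only for one balanced choice of $s_0$. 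A secondary technical point is making the incomplete-Gamma estimate for $p$ rigorous with explicit, non-asymptotic constants, since the claim must hold for all finite $n$. An alternative that sidesteps the $s_0$-split is to invoke negative association of the Dirichlet coordinates, which dominates $|\{i:X_i\ge n^{-c_3}\}|$ by a genuine $\mathrm{Bin}(n,p_1)$ with $p_1=\Pr[X_1\ge n^{-c_3}]$ and lets a single Chernoff bound finish the job, at the cost of a less elementary ingredient.
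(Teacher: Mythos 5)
Your skeleton is in fact the paper's own: it too passes to i.i.d.\ $\GammaD(c_1/n)$ variables, certifies the denominator via the event that some single coordinate is large (your bound $\Pr[S<s_0]\le\Pr[G_1<s_0]^n$ is literally the complement of its event $A$), and reduces the count to a $\mathrm{Bin}(n,p)$ tail, with your $s_0$ playing the role of the paper's $c/\epsilon$. The genuine gap is in the calibration you yourself flag as the main obstacle: you have attached the two failure events to the wrong error terms, and under that assignment no ``balanced'' $s_0$ exists. Since $S\sim\GammaD(c_1)$ with $c_1$ fixed, $\Pr[S<s_0]\asymp s_0^{c_1}/\Gamma(c_1+1)$, so demanding $\Pr[S<s_0]\le e^{-4/9}n^{-4c_2/9}$ forces $s_0\lesssim n^{-4c_2/(9c_1)}$; then $u=n^{-c_3}s_0$ gives $np\approx c_1\ln(1/u)\approx(c_1c_3+\tfrac49 c_2)\ln n$, not your $c_1c_3\ln n$ --- that estimate tacitly takes $s_0=\Theta(1)$, under which $\Pr[S<s_0]=\Theta(1)$ and the second term never decays. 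With the inflated mean, your requirement $np(e^{1/3}-1)\le c_1c_3\ln n$ holds only when $c_2\le 3.44\,c_1c_3$ roughly, nearly disjoint from the regime $c_2>3c_1c_3$ where the claimed first term even tends to zero; and the failure is not about fixing $\lambda=\tfrac13$: with mean $\tfrac49 c_2\ln n$ against threshold $c_2\ln n$, the Cram\'er-optimal exponent is $\tfrac49 c_2\ln n\,(\tfrac94\ln\tfrac94-\tfrac94+1)\approx 0.255\,c_2\ln n$, and this is tight for the binomial up to polynomial factors, so the stated exponent $\tfrac{c_2}3-c_1c_3$ is unreachable on your route whenever $c_2\gtrsim 12.8\,c_1c_3$, no matter what tail inequality you use.

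The repair is to swap the roles of the two terms, which is exactly how the paper's \Cref{fact:helperdog} proceeds. Choose the count threshold $t$ (your $n^{-c_3}s_0$) implicitly as a quantile, $\Pr[G_1\ge t]=\tfrac{k+1}{3n}$ with $k=c_2\ln n$, so that $np=\tfrac{k+1}3$ exactly and the count failure $\{M\ge k+1\}$ is a deviation to three times the mean; the multiplicative Chernoff bound then gives the \emph{second} stated term $e^{-4(k+1)/9}=e^{-4/9}n^{-4c_2/9}$. Then set $s_0=t\,n^{c_3}$ and bound $\Pr[S<s_0]\le\Pr[G_1< t\,n^{c_3}]^n$ not by an incomplete-gamma computation but by the scaling inequality of \Cref{fact:gamma_blowup}, $\Pr[\GammaD(\alpha)\le zc]\le z^\alpha\Pr[\GammaD(\alpha)\le c]$ with $z=n^{c_3}$, which yields $\big(n^{c_3\alpha}(1-\tfrac{k+1}{3n})\big)^n\le n^{c_1c_3}e^{-(k+1)/3}$ --- the \emph{first} stated term, showing the factor $n^{c_1c_3}$ belongs to the denominator event, not the count. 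This choice also dissolves your secondary technical worry: no $\Gamma(\alpha)\sim1/\alpha$ asymptotics, explicit tail constants, or rounding of $k$ are needed anywhere (the $+1$ enters through the quantile level $\tfrac{k+1}{3n}$, not through rounding), and the argument is non-asymptotic as required. Your negative-association alternative is plausible in outline, but as you note it imports a nontrivial ingredient, would still need an explicit Beta tail estimate, and would not reproduce the stated constants verbatim.
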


The natural question is whether the factor $\ln(n)$ is an artifact of the analysis;
simulation experiments with Dirichlet parameter $\alpha = 1/n$, summarized in
\Cref{fig:sim_log}, exhibit both the $\ln(n)$ term, and the
linear relationship between sparsity threshold and number of coordinates exceeding it.

The techniques here are loose when applied to the case $\alpha = o(1/n)$.  In particular,
\Cref{fig:sim_const} suggests $\alpha=1/n^2$ leads to a single nonsmall coordinate with high
probability, which is stronger than what is captured by the following \namecref{fact:const}.
\begin{theorem}
    \label{fact:const}
    Suppose $n\geq 3$
    and $(X_1,\ldots,X_n) \sim \Dir(1/n^2)$; then
    \begin{align*}
        \Pr\big[
            |\{i : X_i \geq n^{-2}\}| \leq 5
        \big]
        &\geq 1
        - e^{2/e - 2} - e^{-8/3}
        \geq 0.64.
    \end{align*}
    Moreover, for any function $g : \Z_{++} \to \R_{++}$ and
    any $n$ satisfying
    $1\leq\ln(g(n)) < 3n - 1$,
    \begin{align*}
        \Pr\big[
            |\{i : X_i \geq n^{-2}\}| \leq \ln(g(n))
        \big]
        &\geq 1
        - e^{2/e-1/3} \left(\frac 1 {g(n)}\right)^{1/3}
        - e^{-4/9} \left(\frac 1 {g(n)}\right)^{4/9}.
    \end{align*}
\end{theorem}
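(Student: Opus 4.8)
The plan is to pass to the Gamma representation and decouple the shared normalizer. Write $Y_1,\dots,Y_n$ for i.i.d.\ $\GammaD(1/n^2,1)$ variables and $S=\sum_i Y_i$; then $(X_1,\dots,X_n)=(Y_1/S,\dots,Y_n/S)$ and, crucially, $S\sim\GammaD(1/n,1)$ with $\bbE S=1/n$. The event $X_i\ge n^{-2}$ is exactly $Y_i\ge n^{-2}S$, so for any threshold $s_0>0$, on $\{S\ge s_0\}$ we have the event inclusion $X_i\ge n^{-2}\Rightarrow Y_i\ge s_0/n^2$ (no independence is used, only the inclusion). Hence, with $M:=|\{i:Y_i\ge s_0/n^2\}|$,
\[
  \Pr\big[|\{i:X_i\ge n^{-2}\}|>\ln(g(n))\big]\le \Pr[S<s_0]+\Pr[M>\ln(g(n))].
\]
The first term will produce the $g^{-1/3}$ summand and the second the $g^{-4/9}$ summand; the whole argument is the art of calibrating $s_0$ so both land on the stated expressions at once.

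For the first term I would use a lower-tail bound for the Gamma variable $S$. From $\Pr[S<s_0]=\gamma(1/n,s_0)/\GammaD(1/n)$, the estimate $\gamma(1/n,s_0)\le n\,s_0^{1/n}$ (using $e^{-u}\le1$), and $1/\GammaD(1+1/n)\le e^{1/n}$, one gets a clean bound of the shape $\Pr[S<s_0]\le (e s_0)^{1/n}$. Choosing $\ln(1/s_0)$ of order $\tfrac n3\ln(g(n))$ makes this of order $g^{-1/3}$, matching the exponent, and I expect the extra constant $e^{2/e}$ to surface here.

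For the second term, $M\sim\mathrm{Bin}(n,p)$ with $p=\Pr[Y_1\ge s_0/n^2]$, which I would estimate from the incomplete Gamma function: using $e^{-u}\le 1$ on $[t,1]$, $u^{\alpha-1}\le 1$ on $[1,\infty)$, and $1-t^\alpha\le\alpha\ln(1/t)$, one obtains $p\le \tfrac1{n^2}\big(2\ln n+\ln(1/s_0)+O(1)\big)$, so $\mu:=\bbE M=np\le \tfrac1n\big(2\ln n+\ln(1/s_0)\big)+o(1)$. Here the threshold exponent $2$ enters through $\ln(n^2)=2\ln n$, and the universal bound $\sup_{x>0}\tfrac{\ln x}{x}=\tfrac1e$ converts $\tfrac{2\ln n}{n}$ into the constant $\tfrac2e$; this is the sole source of $e^{2/e}$, and with the $s_0$ chosen above the $-\tfrac2e$ hidden in $\ln(1/s_0)$ cancels it, leaving the clean estimate $\mu\le\tfrac13(\ln g(n)+1)$. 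A binomial (Poisson-type) Chernoff bound $\Pr[M>m]\le e^{-\mu}(e\mu/m)^m$ with $m=\ln(g(n))$ then yields the $e^{-4/9}g^{-4/9}$ summand; the hypothesis $\ln g(n)+1<3n$ is exactly what makes the resulting bound $p\le(\ln g(n)+1)/(3n)$ a genuine probability (i.e.\ $<1$), while $\ln g(n)\ge1$ keeps us in the upper-tail regime $m>\mu$.

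Finally, the first (numerical) claim is just the general bound specialized to $g(n)=e^5$: then $\ln g(n)=5$, the hypotheses read $5\ge1$ and $6<3n$ (i.e.\ $n\ge3$), and the two summands become $e^{2/e-1/3}e^{-5/3}=e^{2/e-2}$ and $e^{-4/9}e^{-20/9}=e^{-8/3}$, whence $1-e^{2/e-2}-e^{-8/3}\ge0.64$ after a numerical check. The main obstacle is not any single step but the joint calibration of $s_0$: the two tails pull in opposite directions—shrinking $s_0$ shrinks the count term but inflates the $S$ term—so making both hit the advertised constants $\tfrac13$, $\tfrac49$, $e^{2/e}$ simultaneously requires the Gamma- and binomial-tail estimates, together with the $\tfrac{\ln n}{n}\le\tfrac1e$ trick and the exact $\GammaD(1+1/n)$ factor, to be carried out with their sharp constants rather than the order-of-magnitude versions sketched here.
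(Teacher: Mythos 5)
Your top-level architecture coincides with the paper's: the paper proves this theorem by instantiating \Cref{fact:helperdog} with $\epsilon=n^{-2}$, $\alpha=n^{-2}$, and $k\in\{5,\ln g(n)\}$, and that lemma's proof is exactly your decomposition --- pass to the Gamma representation (\Cref{fact:dir_gamma_equiv}), intersect a ``normalizer is large'' event with a ``few $Y_i$ exceed a cutoff'' event, and union-bound. Where you genuinely diverge is instructive. The paper never touches the law of $S$: it forces $S\ge c/\epsilon$ through the event that some \emph{single} $Y_i$ is large, bounding $\Pr[\forall i,\ Y_i< c/\epsilon]=\Pr[Y_1\le c/\epsilon]^n$ via the cdf-dilation inequality of \Cref{fact:gamma_blowup}, whereas you use additivity ($S\sim\GammaD(1/n)$) and a direct lower-tail estimate. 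Likewise, the paper picks its cutoff $c$ implicitly as an exact quantile, $\Pr[\GammaD(\alpha)\ge c]=\frac{k+1}{3n}$, so $\bbE Z_i$ is exact and no incomplete-gamma estimates are needed; your explicit $s_0$ with an upper bound on $p$ is workable (using monotonicity of the binomial tail in $p$), but it drags in $O(1/n)$ losses (the $e^{1/n}$ from $1/\Gamma(1+1/n)$, the $e^{1/n^2}$ from $1/\Gamma(1+\alpha)$, the $\alpha/e$ tail term) that must be absorbed into $s_0$ and hence degrade the first summand --- non-negligibly at $n=3$, which the numerical claim must cover. Your accounting of $e^{2/e}$ is correct in spirit and matches the paper's: both proofs use $\sup_{x}\ln(x)/x=1/e$ applied to $2\ln(n)/n$; the paper applies it as $\epsilon^{-n\alpha}=n^{2/n}\le e^{2/e}$ directly in the normalizer term, while you shuttle the same quantity through $\mu$ and back into $\Pr[S<s_0]$ via the choice of $s_0$.

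The genuine gap is the Chernoff step. The Poisson-type bound $\Pr[M\ge m]\le e^{-\mu}(e\mu/m)^m$ is the \emph{optimized} MGF bound, and at the operating point $m\approx 3\mu$ its exponent is $(3\ln 3-2)\mu$, i.e.\ $\ln 3-\frac23\approx 0.4319$ per unit of $k+1$ --- strictly smaller than the advertised $\frac49\approx 0.4444$. So no recalibration of $s_0$ can rescue the second summand: your route proves $e^{-c}g(n)^{-c}$ with $c=\ln 3-\frac23$, not $e^{-4/9}g(n)^{-4/9}$, and in the numeric case it yields (even exploiting integrality, $M>5\Rightarrow M\ge 6$) $e^{-2}(e/3)^{6}\approx 0.075$ in place of $e^{-8/3}\approx 0.070$, leaving the $0.64$ claim hanging by a thread that the $O(1/n)$ slop above then snaps. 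The paper obtains $\frac43$ per unit of $\mu$ by invoking the multiplicative form $\Pr[\sum_i Z_i\ge(1+\beta)\mu]\le e^{-\mu\beta^2/3}$ at $\beta=2$ \citep[Theorem 9.2]{kearns_vazirani}; to match the theorem as stated you must import that inequality rather than the $e^{-\mu}(e\mu/m)^m$ form. (A caveat worth knowing: the $\beta^2/3$ form is usually stated only for $\beta\le1$, and your own rate computation shows $e^{-4\mu/3}$ at ratio $3$ cannot follow from the MGF method, so the constant $\frac49$ is delicate even in the paper; with your tools the honest conclusion is the same theorem with $\frac49$ replaced by $\ln 3-\frac23$ and a correspondingly adjusted numerical constant.)
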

    (Take for instance $g$ to be the inverse Ackermann function.)

\begin{figure}
\centering
\subfloat{
\label{fig:sim_log}
\includegraphics[width=0.495\textwidth]{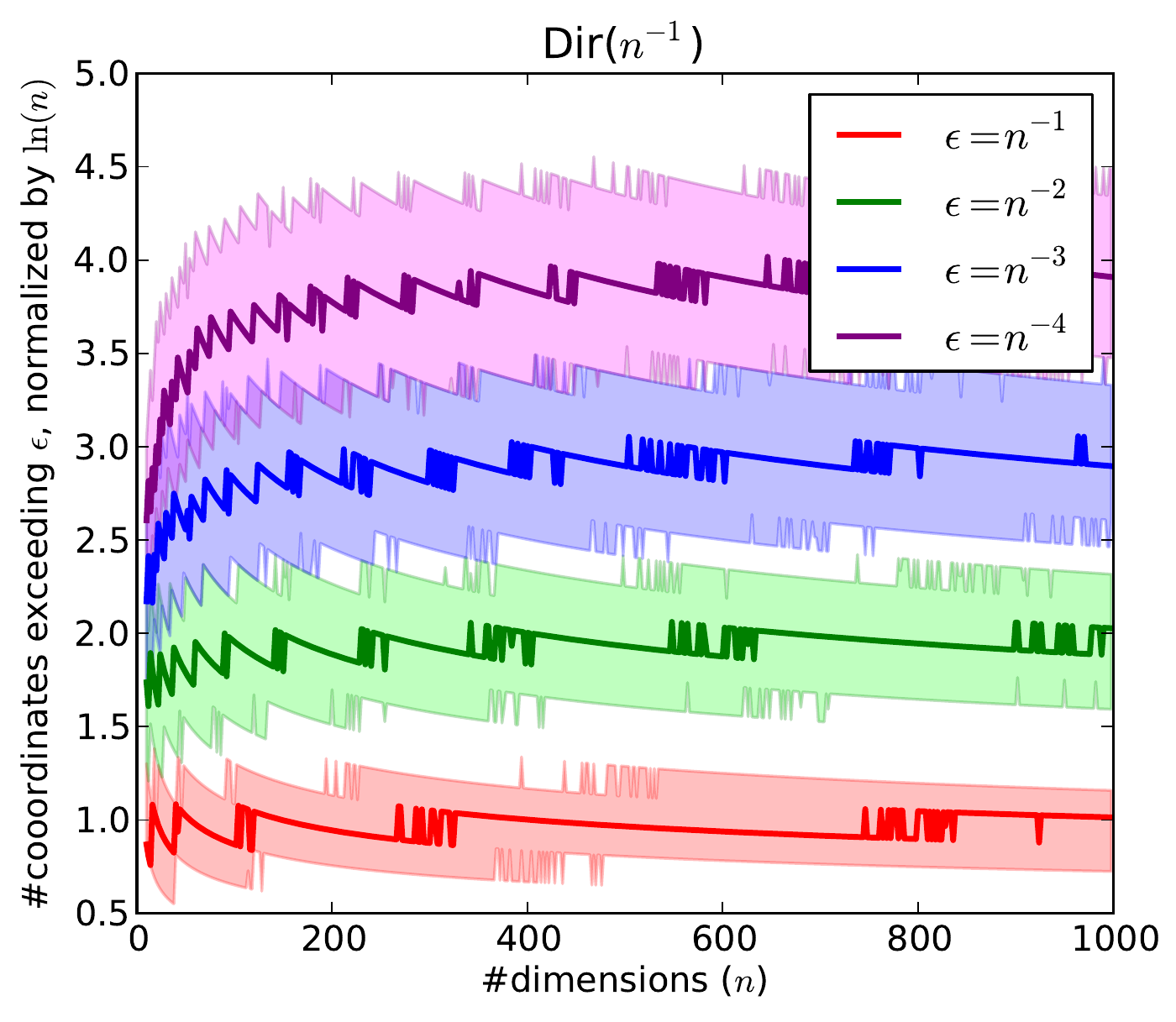}
}
\subfloat{
\label{fig:sim_const}
\includegraphics[width=0.495\textwidth]{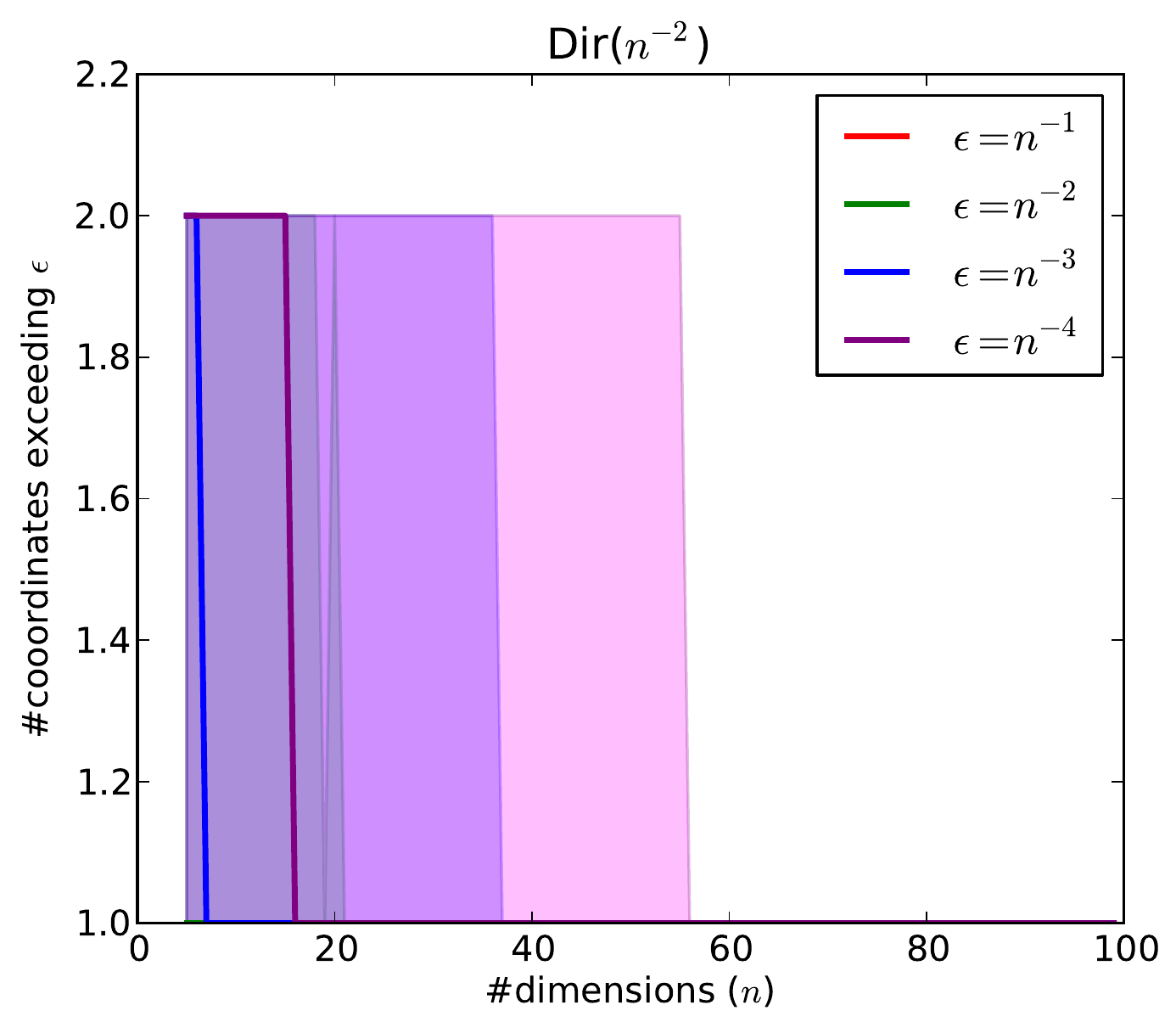}
}
\caption{For each Dirichlet parameter choice $\alpha \in \{n^{-1}, n^{-2}\}$
    and each number of dimensions $n$  (horizontal axis),
    1000 Dirichlet distributions were sampled.
    For each trial, the number of coordinates exceeding each of 4 choices of
    threshold were computed.
    In the case of $\alpha = n^{-1}$, these counts were then scaled by $\ln(n)$ to
    better coordinate with the suggested trends in \Cref{fact:log,fact:log_fancier}.
    Finally, these counts values (for each $(n,\epsilon)$) were converted into quantile
    curves (25\%--75\%).
}
\end{figure}

\section{Proofs}

\Cref{fact:log,fact:log_fancier,fact:const} are established via
the following \namecref{fact:helperdog}.

\begin{lemma}
    \label{fact:helperdog}
    Let reals $\epsilon \in (0,1]$ and $\alpha > 0$ and positive integers $k,n$ be given
    with $k +1 < 3n$.
    Let $(X_i,\ldots,X_n)\sim \Dir(\alpha)$.
    Then
    \[
        \Pr\big[
            |\{i : X_i \geq \epsilon\}| \leq k
        \big]
        \geq 1
        - \epsilon^{-n\alpha} e^{-(k+1)/3}
        - e^{-4(k+1)/9}.
    \]
\end{lemma}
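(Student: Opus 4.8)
The plan is to pass from the Dirichlet vector to its Poisson (Gamma) representation and exploit the independence of the normalized vector from its total mass. Write $X_i = Y_i/S$, where $Y_1,\dots,Y_n \sim \GammaD(\alpha,1)$ are i.i.d.\ and $S=\sum_j Y_j \sim \GammaD(n\alpha,1)$, and recall the classical fact that the normalized vector $(X_1,\dots,X_n)$ is independent of $S$. Consequently $N := |\{i:X_i\ge\epsilon\}|$ is independent of $S$, and the lemma amounts to bounding $\Pr[N\ge k+1]$ by the stated two-term expression. The role of the independence is to let me trade the awkward relative threshold $X_i\ge\epsilon \iff Y_i \ge \epsilon S$ for a deterministic one: fixing $\tau>0$, on the event $\{S\ge\tau\}$ one has $Y_i \ge \epsilon S \ge \epsilon\tau$, so $N \le M$ there, where $M := |\{i: Y_i\ge\epsilon\tau\}|$ counts exceedances of the fixed level $\epsilon\tau$. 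Using independence, $\Pr[N\ge k+1]\,\Pr[S\ge\tau] = \Pr[N\ge k+1,\,S\ge\tau] \le \Pr[M\ge k+1]$, whence
\[
  \Pr[N\ge k+1] \le \frac{\Pr[M\ge k+1]}{\Pr[S\ge\tau]}.
\]

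The numerator is a clean binomial tail: $M$ is a sum of $n$ independent indicators with parameter $p=\Pr[Y_1\ge\epsilon\tau]$, so a Chernoff/moment bound gives $\Pr[M\ge k+1]\le e^{-\theta(k+1)}\,\bbE[e^{\theta M}] \le e^{-\theta(k+1)}\exp\!\big(np(e^\theta-1)\big)$, the Poisson-type tail with mean $np$. The single-coordinate probability is estimated elementarily: splitting $\Pr[Y_1\ge\epsilon\tau]=\Gamma(\alpha)^{-1}\int_{\epsilon\tau}^\infty y^{\alpha-1}e^{-y}\,dy$ at $1$ and using $1-t^\alpha\le\alpha\ln(1/t)$ gives a bound of the shape $p \le \alpha\ln\frac{1}{\epsilon\tau} + (\text{const})\cdot\alpha$, which places $np$ on the scale $n\alpha\ln(1/\epsilon)$. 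This, together with the Gamma normalizer tail $1/\Pr[S\ge\tau]$ once $\tau$ is chosen on the $\ln(1/\epsilon)$ scale, is precisely what manufactures the factor $\epsilon^{-n\alpha}$: the exponent $n\alpha$ is the shape parameter of $S$ and enters through $\exp(np(e^\theta-1))$ and through the reciprocal tail of $S$, while the tilt factor $e^{-\theta(k+1)}$ (with $\theta$ of order one, taken near $1/3$) supplies the geometric decay in $k+1$. The denominator $\Pr[S\ge\tau]$ is bounded below by the same Chernoff/incomplete-Gamma machinery applied to the lower tail $\Pr[S<\tau]$; this estimate contributes the $\epsilon$-free second error term $e^{-4(k+1)/9}$, which comes from the regime of the binomial/Gamma deviation bound at multiplicative deviation of order $3$.

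The remaining work is calibration: select $\tau$ (on the $\ln(1/\epsilon)$ scale) and the tilt $\theta$ so that the binomial mean $np$, the two tails of $S$, and the per-coordinate bound for $p$ all balance through the single free parameter $\tau$, collapsing the product/sum to exactly $\epsilon^{-n\alpha}e^{-(k+1)/3} + e^{-4(k+1)/9}$. The hypothesis $k+1<3n$ enters here and only here: it is equivalent to $\tfrac{k+1}{3n}<1$, i.e.\ it certifies that the relevant exceedance level (equivalently the target mean $np$ of order $(k+1)/3$) is a legitimate probability below $1$, so that $k+1$ lies in the Chernoff regime where the multiplicative deviation estimate is valid. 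I expect the main obstacle to be exactly this constant bookkeeping — reconciling the elementary bound on $p$, the two-sided Gamma tail of $S$, and the binomial Chernoff so as to hit the specific constants $1/3$ and $4/9$ with non-optimized but clean choices. The genuinely conceptual content, by contrast, is the decoupling of the shared normalizer $S$ from the exceedance count via independence and the replacement of the relative threshold by an absolute one, which is what makes any such estimate tractable.
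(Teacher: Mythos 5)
Your overall architecture is genuinely close to the paper's: pass to the Gamma representation $X_i = Y_i/S$, replace the relative threshold $Y_i \ge \epsilon S$ by an absolute one using a lower bound on $S$, and Chernoff-bound the count of absolute exceedances. (Your independence flourish is correct but unnecessary — the additive decomposition $\Pr[N\ge k+1] \le \Pr[M \ge k+1] + \Pr[S<\tau]$ needs no independence.) The gap is in \emph{how} you lower-bound $S$, and it is fatal as described. You propose a deterministic $\tau$ ``on the $\ln(1/\epsilon)$ scale'' with $\Pr[S<\tau]$ controlled by the $\GammaD(n\alpha)$ lower tail. But in the paper's entire regime of interest $n\alpha \lesssim 1$ (e.g.\ $\alpha = c_1/n$, or $\alpha = 1/n^2$ so $n\alpha = 1/n$), the sum $S\sim\GammaD(n\alpha)$ lives near zero: $\Pr[S\le t] \approx t^{n\alpha}/\Gamma(n\alpha+1)$, so its median is roughly $2^{-1/(n\alpha)}$ — about $2^{-n}$ when $\alpha = n^{-2}$. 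Any $\tau$ of order $\ln(1/\epsilon)$ then has $\Pr[S\ge\tau] = O(n\alpha)$, which destroys both your ratio bound and the additive one. If you instead repair this by pushing $\tau$ down to the scale $e^{-4(k+1)/(9n\alpha)}$ so that $\Pr[S<\tau]\le e^{-4(k+1)/9}$, the repair provably cannot hit the stated constants: the exceedance probability is forced to satisfy $np \ge n\alpha\ln(1/\epsilon) + \tfrac{4}{9}(k+1) - O(n\alpha)$, and in the limit $n\alpha\ln(1/\epsilon)\to 0$ even the \emph{exact} Poisson/binomial tail at level $k+1$ with mean $\mu = \tfrac49(k+1)$ gives only $\exp(-\mu(\tfrac94\ln\tfrac94 - \tfrac54)) \approx e^{-0.255\,(k+1)}$, which exceeds the target $e^{-(k+1)/3}$. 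So your calibration claim (``collapsing to exactly'' the two stated terms, with the count supplying $\epsilon^{-n\alpha}e^{-(k+1)/3}$ and the $S$-tail supplying $e^{-4(k+1)/9}$) is not achievable; no choice of tilt $\theta$ closes the constants, since $\sup_\theta\,[\theta - \tfrac49(e^\theta-1)] \approx 0.255 < 1/3$.

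The paper's proof dodges this exactly where your plan breaks: it never touches the $\GammaD(n\alpha)$ tail. Instead it lower-bounds $S$ by the \emph{maximum coordinate}: with $c$ chosen so that $\Pr[\GammaD(\alpha)\ge c] = \tfrac{k+1}{3n}$ (this is where $k+1<3n$ enters — to make the quantile exist, not to validate a Chernoff regime as you guessed), the event $A = [\exists i:\, Y_i \ge c/\epsilon]$ forces $S \ge c/\epsilon$, and its complement factors over coordinates: $\Pr(\lnot A) = \Pr[Y_1 \le c/\epsilon]^n \le \epsilon^{-n\alpha}(1-\tfrac{k+1}{3n})^n \le \epsilon^{-n\alpha}e^{-(k+1)/3}$, using the scaling inequality $\Pr[\GammaD(\alpha)\le zc]\le z^\alpha \Pr[\GammaD(\alpha)\le c]$ (the paper's \Cref{fact:gamma_blowup}, essentially your ``blowup'' estimate on $p$, but applied to the cdf at $c/\epsilon$ rather than to the exceedance probability). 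The count is then taken at the $\epsilon$-\emph{free} threshold $c$, with mean exactly $\tfrac{k+1}{3}$, so the multiplicative Chernoff bound at deviation factor $3$ yields $e^{-4(k+1)/9}$ cleanly. Note the term assignment is the opposite of yours: the $\epsilon$-dependence necessarily lands in the $S$-lower-bound term (where the threshold $c/\epsilon$ lives), and the count term is $\epsilon$-free — in your plan the count threshold $\epsilon\tau$ entangles $\epsilon$ with the binomial tail, which is the structural reason your bookkeeping fights itself. Your framework does recover the paper's proof if you set $\tau = c/\epsilon$ and bound $\Pr[S<\tau]$ via $\max_i Y_i$ instead of via the incomplete-Gamma function of $S$; as written, though, the proposal is missing this key idea.
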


The proof avoids dependencies between the coordinates of a Dirichlet draw via
the following alternate representation.  Throughout the rest of this section,
let $\GammaD(\alpha)$ denote a Gamma distribution with parameter $\alpha$.

\begin{lemma}(See for instance \citet[Equation 27.17]{balakrishnan_stats_primer}.)
    \label{fact:dir_gamma_equiv}
    Let $\alpha > 0$ and $n \geq 1$ be given.
    If $(X_1,\ldots,X_n)\sim \Dir(\alpha)$ and
    $\{Y_i\}_{i=1}^n$ are $n$ i.i.d. copies of $\GammaD(\alpha)$,
    then
    \[
        (X_1,\ldots,X_n) \stackrel{d}{=} \left\{\frac{Y_i}{\sum_{i=1}^n Y_i} \right\}.
    \]
\end{lemma}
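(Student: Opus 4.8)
The plan is to prove the distributional identity directly by computing the density of the normalized Gamma vector and matching it to the Dirichlet density. Fixing the convention that $\GammaD(\alpha)$ has shape $\alpha$ and unit rate, the $n$ independent copies $(Y_1,\ldots,Y_n)$ have joint density $\Gamma(\alpha)^{-n}\big(\prod_{i=1}^n y_i^{\alpha-1}\big)e^{-\sum_{i=1}^n y_i}$ on $(0,\infty)^n$. Since the Dirichlet law is supported on the $(n-1)$-dimensional open simplex, I would change variables from $(Y_1,\ldots,Y_n)$ to $(X_1,\ldots,X_{n-1},S)$ where $X_i = Y_i/S$ and $S = \sum_{j=1}^n Y_j$; the inverse is $Y_i = S X_i$ for $i<n$ and $Y_n = S\big(1-\sum_{i<n}X_i\big)$, a smooth bijection from $(0,\infty)^n$ onto the product of the open simplex with $(0,\infty)$.

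The crux is the Jacobian of this inverse map. I would assemble the $n\times n$ matrix of partials $\partial Y_i/\partial(X_1,\ldots,X_{n-1},S)$: its top-left $(n-1)\times(n-1)$ block is $S$ times the identity, its last column (the $S$-derivatives) is $(X_1,\ldots,X_{n-1},X_n)^{\top}$ with $X_n := 1-\sum_{i<n}X_i$, and its bottom row begins with $n-1$ entries equal to $-S$. Adding rows $1$ through $n-1$ to the bottom row sends its first $n-1$ entries to $0$ and its final entry to $\sum_{i=1}^n X_i = 1$; expanding along the resulting bottom row then gives determinant $S^{n-1}$. Because the coordinate map must produce exactly this clean power of $S$ for the densities to align, this determinant evaluation is the step I expect to require the most care.

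Finally I would substitute $y_i = S x_i$ into the joint density and multiply by the Jacobian factor $S^{n-1}$. The exponent becomes $e^{-S}$ and the product becomes $S^{n(\alpha-1)}\prod_{i=1}^n x_i^{\alpha-1}$, so the density of $(X_1,\ldots,X_{n-1},S)$ factors as $\Gamma(\alpha)^{-n}\big(\prod_{i=1}^n x_i^{\alpha-1}\big)\cdot S^{n\alpha-1}e^{-S}$. This product structure immediately shows that $S$ is independent of $(X_1,\ldots,X_{n-1})$ and that $S\sim\GammaD(n\alpha)$; integrating $S^{n\alpha-1}e^{-S}$ over $(0,\infty)$ contributes the factor $\Gamma(n\alpha)$, leaving the marginal density $\frac{\Gamma(n\alpha)}{\Gamma(\alpha)^n}\prod_{i=1}^n x_i^{\alpha-1}$ on the simplex. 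This is exactly the $\Dir(\alpha)$ density, so the normalized vector has the Dirichlet law, establishing the claim; the case $n=1$ is trivial since both sides equal the constant $1$.
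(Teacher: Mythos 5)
Your proof is correct. Note that the paper itself offers no proof of this lemma at all---it is stated as a known fact with a pointer to \citet[Equation 27.17]{balakrishnan_stats_primer}---so there is no internal argument to compare against; what you have written is precisely the standard change-of-variables derivation that such references contain. Your Jacobian computation checks out: the row operation sends the bottom row to $(0,\ldots,0,1)$, the determinant is $S^{n-1}$, and the exponent bookkeeping $S^{n(\alpha-1)}\cdot S^{n-1} = S^{n\alpha-1}$ yields the clean factorization showing $S \sim \GammaD(n\alpha)$ independent of the normalized vector, with the correct Dirichlet normalizing constant $\Gamma(n\alpha)/\Gamma(\alpha)^n$ emerging from the $S$-integral. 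One pedantic point you handle only implicitly: the lemma asserts equality in distribution of the full $n$-vectors, while your computation identifies the law of the first $n-1$ coordinates; this suffices because both $(X_1,\ldots,X_n)$ and $(Y_1/S,\ldots,Y_n/S)$ have last coordinate almost surely determined as one minus the sum of the others, a fact worth one explicit sentence. As a bonus, your argument proves strictly more than the lemma claims (the independence of the sum $S$ from the normalized vector), which is the content behind the classical result and would survive unchanged for unequal Dirichlet parameters $(\alpha_1,\ldots,\alpha_n)$.
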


Before turning to the proof of \Cref{fact:helperdog}, one more \namecref{fact:gamma_blowup}
is useful, which will allow a control of the Gamma distribution's cdf.
\begin{lemma}
    \label{fact:gamma_blowup}
    For any $\alpha > 0$, $c\geq 0$, and $z \geq 1$,
    \[
        \Pr[\GammaD(\alpha) \leq zc] \leq z^\alpha\Pr[\GammaD(\alpha) \leq c].
    \]
\end{lemma}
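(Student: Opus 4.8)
The plan is to work directly with the density of the Gamma distribution and reduce the inequality to an elementary change of variables. Write the cdf as $\Pr[\GammaD(\alpha)\leq c] = \frac{1}{\Gamma(\alpha)}\int_0^c x^{\alpha-1}e^{-x}\,dx$, so that, after clearing the common positive factor $1/\Gamma(\alpha)$, the claim is equivalent to
\[
    \int_0^{zc} x^{\alpha-1}e^{-x}\,dx \;\leq\; z^\alpha \int_0^c x^{\alpha-1}e^{-x}\,dx .
\]
The case $c=0$ is trivial since both sides vanish, so I may assume $c>0$.

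The key step is the substitution $x = zu$ in the left-hand integral. Then $dx = z\,du$, the upper limit $x=zc$ becomes $u=c$, and $x^{\alpha-1} = z^{\alpha-1}u^{\alpha-1}$; collecting the factor $z^{\alpha-1}$ from the monomial together with the Jacobian $z$ yields
\[
    \int_0^{zc} x^{\alpha-1}e^{-x}\,dx \;=\; z^\alpha \int_0^c u^{\alpha-1}e^{-zu}\,du .
\]
This single rescaling produces exactly the target factor $z^\alpha$, and the only remaining discrepancy from the right-hand side is that the exponential is $e^{-zu}$ rather than $e^{-u}$.

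To close the gap I would invoke the hypothesis $z\geq 1$: for every $u\geq 0$ one has $zu\geq u$, hence $e^{-zu}\leq e^{-u}$. Bounding the integrand pointwise over $u\in[0,c]$ gives $z^\alpha \int_0^c u^{\alpha-1}e^{-zu}\,du \leq z^\alpha \int_0^c u^{\alpha-1}e^{-u}\,du$, which is precisely $z^\alpha\Gamma(\alpha)\Pr[\GammaD(\alpha)\leq c]$. Restoring the factor $1/\Gamma(\alpha)$ completes the proof.

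There is no substantive obstacle here; the result is essentially a one-line consequence of rescaling. The only points demanding a moment's care are to verify that the direction of the exponential bound is correct (one needs $z\geq 1$, not $z\leq 1$, so that the rescaled argument $zu$ is \emph{larger} and the exponential correspondingly \emph{smaller}), and to confirm that the monomial factor $z^{\alpha-1}$ and the Jacobian $z$ combine to give $z^\alpha$, with the exponent matching the shape parameter exactly as the statement requires.
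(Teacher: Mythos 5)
Your proof is correct and is essentially identical to the paper's: the same change of variables $x = zu$ producing the factor $z^{\alpha}$, followed by the same pointwise bound $e^{-zu} \leq e^{-u}$ for $z \geq 1$. Your separate handling of $c=0$ is harmless but unnecessary, since the rescaling argument holds trivially there as well.
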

\begin{proof}
    Since $e^{-zx} \leq e^{-x}$ for every $x\geq 0$ and $z\geq 1$,
    \begin{align*}
        \Pr[\GammaD(\alpha) \leq zc]
        &=
        \frac {1}{\Gamma(\alpha)}
        \int_0^{zc}
        e^{-x}x^{\alpha-1}dx
        \\
        &
        =
        \frac {1}{\Gamma(\alpha)}
        \int_0^{c}
        e^{-zx}(zx)^{\alpha-1}zdx
        \\
        &\leq
        \frac {z^\alpha}{\Gamma(\alpha)}
        \int_0^{c}
        e^{-x}x^{\alpha-1}dx
        \\
        &=
        z^\alpha
        \Pr[\GammaD(\alpha) \leq c].
\qedhere
    \end{align*}
\end{proof}

\begin{proof}[Proof of \Cref{fact:helperdog}]
    Since $z \mapsto \Pr[\GammaD(\alpha) \geq z]$ is continuous and has range $[0,1]$,
    choose $c\geq 0$ so that
    \begin{align}
        \Pr[\GammaD(\alpha) > c] = \Pr[\GammaD(\alpha) \geq c] = \frac {k+1}{3n},
    \end{align}
    where $(k+1)/(3n) < 1$.
    By this choice and \Cref{fact:gamma_blowup},
    \begin{align}
        \Pr[\GammaD(\alpha) \leq c/\epsilon]
        \leq \epsilon^{-\alpha} \Pr[\GammaD(\alpha) \leq c]
        = \epsilon^{-\alpha} \left(1-\frac {k+1}{3n}\right)
        \leq \epsilon^{-\alpha} e^{-(k+1)/(3n)}.
        \label{eq:dirbound:1}
    \end{align}

    Now let $\{Y_i\}_{i=1}^n$ be $n$ i.i.d. copies of $\GammaD(\alpha)$.
    Define the events
    \[
        A := \left[\exists i \in [n] \centerdot Y_i \geq c/\epsilon\right]
        \qquad
        \textup{and}
        \qquad
        B := \left[|\{i \in [n] : Y_i \leq c\}| \geq n-k\right].
    \]
    The remainder of the proof will establish a lower bound on
    $\Pr(A\land B)$.  To see that this finishes the proof,
    define $S := \sum_i Y_i$; since event $A$ implies that $S \geq c/\epsilon$,
    it follows that $Y_i \leq c$ implies $Y_i/S \leq \epsilon$.  Consequently,
    events $A$ and $B$ together imply that $Y_i/S \leq \epsilon$ for at least
    $n-k$ choices of $i$.  By \Cref{fact:dir_gamma_equiv}, it follows that
    $\Pr(A\land B)$ is a lower bound on the event that a draw from $\Dir(\alpha)$
    has at least $n-k$ coordinates which are at most $\epsilon$.

    Returning to task, note that
    \begin{align}
        \Pr(A\land B)
        &= 1 - \Pr(\lnot A \lor \lnot B)
        \geq 1 - \Pr(\lnot A) - \Pr(\lnot B).
        \label{eq:dirbound:2}
    \end{align}
    To bound the first term, by \cref{eq:dirbound:1},
    \begin{align}
        \Pr(\lnot A)
        &= \Pr[\forall i\in [n] \centerdot Y_i < c/\epsilon]
        = \Pr[Y_1 \leq c/\epsilon]^n
        \leq \epsilon^{-n\alpha} e^{-(k+1)/3}.
        \label{eq:dirbound:3}
    \end{align}
    For the second term, define indicator random variables
    $Z_i := [Y_i > c]$, whereby
    \[
        \bbE(Z_i) = \Pr[Z_i = 1] = \Pr[Y_i > c] = \Pr[Y_i \geq c]= \frac {k+1}{3n}.
    \]
    Then, by a multiplicative Chernoff bound \citep[Theorem 9.2]{kearns_vazirani},
    \begin{align}
        \Pr(\lnot B)
        &= \Pr[|\{i \in [n] : Y_i > c\}| \geq k+1]
        = \Pr\left[\sum_i Z_i \geq 3n \bbE(Z_i)\right]
        \leq \exp(-4n\bbE(Z_i)/3).
        \label{eq:dirbound:4}
    \end{align}
    Inserting \eqref{eq:dirbound:3} and \eqref{eq:dirbound:4} into the
    lower bound on $\Pr(A\land B)$ in \eqref{eq:dirbound:2},
    \[
        \Pr(A\land B)
        \geq 1
        - \epsilon^{-n\alpha} e^{-(k+1)/3}
        - e^{-4(k+1)/9}.
        \qedhere
    \]
\end{proof}

\begin{proof}[Proof of \Cref{fact:log_fancier}]
    Instantiate \Cref{fact:helperdog} with
    $k = c_2\ln(n)$, $\alpha = c_1/n$, and $\epsilon = n^{-c_3}$.
\end{proof}

\begin{proof}[Proof of \Cref{fact:log}]
    Instantiate \Cref{fact:log_fancier} with
    $c_1 = 1$, $c_2 = 6c_0$, $c_3 = c_0$, and note
    \[
       \frac 1 {e^{1/3}} \left( \frac 1 n\right)^{c_0}
       +\frac 1 {e^{4/9}} \left( \frac 1 n\right)^{\frac {8c_0}{3}}
       \leq
       \frac 1 {n^{c_0}}\left(
       \frac 1 {e^{1/3}}
       +\frac 1 {e^{4/9}} \left( \frac 1 2\right)^{\frac {5c_0}{3}}
       \right)
       \leq \frac 1 {n^{c_0}}.
       \qedhere
    \]
\end{proof}

\begin{proof}[Proof of \Cref{fact:const}]
    Define the function $f(z) := z ^{-z}$ over $(0,\infty)$.  Note that
    $f'(z) = - (\ln(z) + 1)z^{-z}$, which is positive for $z<1/e$, zero at $z=1/e$, and negative
    thereafter; consequently,
    $\sup_{z\in (0,\infty)} f(z) = f(1/e) = e^{1/e}$.
    As such, instantiating \Cref{fact:helperdog} with $\epsilon = n^{-2}$,
    $\alpha=n^{-3}$, and any $k < 3n - 1$ gives
    \begin{align*}
        \Pr\big[
            |\{i : X_i \geq n^{-2}\}| \leq k
        \big]
        &\geq 1
        - n^{2/n} e^{-(k+1)/3}
        - e^{-4(k+1)/9}
        \\
        &\geq 1
        - e^{2/e} e^{-(k+1)/3}
        - e^{-4(k+1)/9}.
    \end{align*}
    Plugging in $k\in \{5,\ln(g(n))\}$ gives the two bounds.
\end{proof}

\subsection*{Acknowledgement}
The author thanks Anima Anandkumar and Daniel Hsu for relevant discussions.

\addcontentsline{toc}{section}{References}
\bibliographystyle{plainnat}
\bibliography{notes2013}

\end{document}